\title{A General Reduction for High-Probability Analysis \\ with General Light-Tailed Distributions}
\author{%
    Amit Attia%
    \thanks{\scriptsize Blavatnik School of Computer Science, Tel Aviv University; \texttt{amitattia@mail.tau.ac.il}.}
    \and
    Tomer Koren%
    \thanks{\scriptsize Blavatnik School of Computer Science, Tel Aviv University, and Google Research Tel Aviv; \texttt{tkoren@tauex.tau.ac.il}.}
}
\date{}
\begin{document}
\maketitle

\begin{abstract}%
We describe a general reduction technique for analyzing learning algorithms that are subject to light-tailed (but not necessarily bounded) randomness, a scenario that is often the focus of theoretical analysis.
We show that the analysis of such an algorithm can be reduced, in a black-box manner and with only a small loss in logarithmic factors, to an analysis of a simpler variant of the same algorithm that uses bounded random variables and is often easier to analyze.
This approach simultaneously applies to any light-tailed randomization, including exponential, sub-Gaussian, and more general fast-decaying distributions, without needing to appeal to specialized concentration inequalities.
Derivations of a generalized Azuma inequality, convergence bounds in stochastic optimization, and regret analysis in multi-armed bandits with general light-tailed randomization are provided to illustrate the technique.
\end{abstract}

\section{Introduction}
In machine learning and statistics, external randomness that originates from data sampling and noise plays a central role in algorithm design and analysis.
Very often, one desires robust performance guarantees for such algorithms that hold with overwhelmingly high probability rather than in constant probability or in expectation.
Carrying out a high-probability analysis often requires the use of concentration inequalities, such as Hoeffding's inequality, McDiarmid's inequality, or Azuma's inequality~\citep[e.g.,][]{dubhashi2009concentration,boucheron2013concentration}, that in their simplest form deal with random variables with bounded support. By leveraging these concentration results on the internal randomization of the algorithm, it is possible to provide performance guarantees that hold with high probability.

In these cases, we would like our probabilistic analysis to hold as generally as possible with respect to the distribution of the external noise, which is in general not subject to the choice of the algorithm designer.
In order to be able to provide high-probability guarantees, it is common to assume that the noise follows a light-tailed distribution, most commonly bounded noise or sub-Gaussian noise. 
While high-probability guarantees under the weaker sub-Gaussian assumption are desirable, they often lead to more complex analyses relying on ad-hoc concentration analyses that are auxiliary to the core algorithmic solution.
Ideally, we would like to conduct the analysis within the simplified bounded noise framework, where the core algorithmic ideas are more transparent, and be able to extend this analysis in a black-box manner to more general light-tailed noise distributions.

In this paper, we describe a simple reduction technique for analyzing probabilistic algorithms that rely on any light-tailed (but not necessarily bounded) noise, including but not limited to sub-Gaussian distributions.  Our approach is to reduce the analysis in this general case to an analysis of a simpler variant of the same algorithm where the noise distribution is of bounded support, potentially at the expense of sub-optimality by logarithmic factors.
This reduction simultaneously applies to any light-tailed randomization, including exponential, sub-Gaussian, and more general fast-decaying distributions, without relying on specialized concentration inequalities.
To the best of our knowledge, such a black-box approach to probabilistic analysis of general algorithms has not appeared explicitly in the literature before.

To illustrate the technique, our main example focuses on a central problem in modern machine learning: stochastic optimization with stochastic gradient descent (SGD). In this context, a long line of work studied high-probability guarantees in various scenarios, assuming either bounded noise distributions~\citep[e.g.,][]{kakade2008generalization,harvey2019tight,carmon2022making} or sub-Gaussian noise distributions~\citep[e.g.,][]{lan2012optimal,ghadimi2013stochastic,li2020high,kavis2022high,liu2023high}.
We demonstrate how our black-box approach can produce high-probability bounds for SGD under minimal assumptions without the additional technical complexity introduced by unbounded noise distributions.

To further illustrate the technique, we apply our black-box approach to the canonical Upper Confidence Bound (UCB) algorithm for stochastic multi-armed bandits~\citep{auer2002finite}, extending its standard analysis (that traditionally assumes bounded rewards) to allow for general light-tailed reward distributions.

\subsection{Related work}

\paragraph{Generalizing concentration results.}

More general tail-dependent concentration results have been developed in the literature.~\citep[e.g.,][]{vladimirova2020sub,kuchibhotla2022moving,Gotze2019ConcentrationIF,zhang2022sharper,bakhshizadeh2023sharp,liconcentration}.
In particular, considering the sum of $(\gamma,\sigma,2)$-tailed random variables, \citet{vladimirova2020sub,kuchibhotla2022moving} established several concentration results, and \citet{Gotze2019ConcentrationIF} studied polynomials of $(\gamma,\sigma,c)$-tailed random variables, providing different forms of concentration.
Beyond light-tailed distributions, there has been work on heavy-tailed, polynomially decaying distributions \citep[e.g.,][]{bakhshizadeh2023sharp,liconcentration}. These probability distributions are left beyond the scope of our work, as the dependence on the failure probability scales polynomially rather than logarithmically, as is the case with bounded support.
In contrast to the works mentioned above, our focus is on simplifying and unifying the development of new algorithmic ideas, rather than relying on increasingly complex tools to handle broader families of distributions. These simplified ideas can then be directly extended to more general distributions without adding complexity to the analysis.

\paragraph{Different choices of noise assumption in applications.}
Although more generalized results can be derived, many works have opted to focus on the simpler assumption of bounded noise. For example, in the multi-armed bandit literature, \citet{auer2002finite} introduced the influential upper confidence bound algorithm under the assumption that the random rewards have bounded support, while later analyses extended these results with more detailed approaches \citep[e.g.,][]{bubeck2012regret,lattimore2020bandit}. In the context of stochastic optimization, the bounded noise assumption is used in works such as \citet{kakade2008generalization, harvey2019tight, carmon2022making, attia2024free}. Other studies have adopted the sub-Gaussian noise assumption \citep{lan2012optimal, ghadimi2013stochastic, li2020high, kavis2022high, liu2023high}, while more general light-tailed assumptions remain rare. Alternatively, some recent studies such as \citet{gorbunov2020stochastic,cutkosky2021high,eldowa2024general} have shifted focus toward heavy-tailed distributions.

\section{Main result}
\label{sec:setup}
We
are interested in analyzing randomized algorithms that internally use light-tailed (yet not necessarily bounded) randomization.
Formally, we will treat a randomized algorithm as a \emph{deterministic} meta-algorithm $\alg(\oracle,n)$ that has access to a \emph{sampling oracle} $\oracle$, with which it interacts for a total of $n$ rounds. %
A sampling oracle $\oracle$ is a function that maps a query $x \in \domain$ from a given query space $\domain$ to a random vector (RV); namely, for any $x \in \domain$ it returns a sample of an RV $\oracle(x)$ whose probability law may be determined by the query $x$. 
In each round $i=1,\ldots,n$, the algorithm $\alg$ may issue a query $x_i \in \domain$, which in itself might be an RV that depends on the oracle's responses from previous rounds, after which the oracle returns the random vector $\oracle(x_i)$ to the algorithm.

We consider sampling oracles whose outputs are bounded or light-tailed random vectors; in the following definition and throughout the manuscript, $\norm{\cdot}$ is an arbitrary norm over vectors.
\begin{definition}[light-tailed RV / sampling oracle]
    A random vector $Z$ is:
    \begin{enumerate}[label=(\roman*)]
        \item
        {\bfseries\boldmath$\bound$-bounded} for $\bound>0$, if $$\Pr(\norm{Z - \E [Z]} \leq \bound) = 1;$$
        \item
        {\bfseries\boldmath$(\gamma,\scale,c)$-tailed} for $\gamma,\scale,c > 0$, if
        for all $t \geq 0$,
        \begin{align*}
            \Pr(\norm{Z - \E [Z]} \geq t) \leq c \cdot \exp(-(t/\scale)^\gamma).
        \end{align*}
    \end{enumerate}
    Analogously, a sampling oracle $\oracle$ is:
    \begin{enumerate}[label=(\roman*)]
        \item
        {\bfseries\boldmath$\bound$-bounded}, for a given function $\bound : \domain \to \reals_+$, if for any query $x \in \domain$ the random vector $\oracle(x)$ is $\bound(x)$-bounded; i.e., 
        $$
            \forall ~ x \in \domain ~ :
            \quad
            \Pr\brk!{\norm!{\oracle(x) - \E[\oracle(x)]} \leq \bound(x)} = 1
            .
        $$
        
        \item
        {\bfseries\boldmath$(\gamma,\scale,c)$-tailed}, for given functions $\gamma,\scale,c : {\domain} \to \reals_{+}$, if for any query $x \in \domain$ the random vector $\oracle(x)$ is $(c(x),\scale(x),\gamma(x))$-tailed; i.e., for all $x \in \domain$ and $t \geq 0$,
        \begin{align*}
            \Pr\brk!{\norm!{\oracle(x) - \E[\oracle(x)]} \geq t} 
            \leq 
            c(x) \cdot e^{-(t / \scale(x))^{\gamma(x)} }
            .
        \end{align*}
    \end{enumerate}
\end{definition}

The definition of $(\gamma,\scale,c)$-tailed random vector is similar to that of a sub-Weibull random variable \citep{vladimirova2020sub}.
Note that sub-Gaussian and sub-exponential oracles, where
\begin{align*}
    \Pr\brk*{\norm{\oracle(x) - \E[\oracle(x)]} \geq t}
    \leq
    2 \exp\brk{- t^2 / \sigma^2}
    ,
\end{align*}
and
\begin{align*}
    \Pr\brk*{\norm{\oracle(x) - \E[\oracle(x)]} \geq t}
    \leq
    2 \exp\brk{- t / \sigma}
\end{align*}
respectively, for $\sigma>0$, are special cases of tailed oracles.

\subsection{Main theorem}

The following is the main result of this paper.

\begin{theorem}\label{thm:alg-view}
    Given an algorithm $\alg$, number of rounds $n$ and a $(\gamma,\scale,c)$-tailed sampling oracle $\oracle$, for any $\delta > 0$ there exists a $\bound$-bounded sampling oracle $\oracletrunc$ with
    \begin{align}\label{eq:alg-view-bound}
        \bound(x) = 4 \scale(x) \brk*{\max\set*{\log \frac{2 c(x) n}{\delta},\frac{2}{\gamma(x)}}}^{1/\gamma(x)}
    \end{align}
    for all $x \in \domain$, such that $\E[\oracle(x)]=\E[\oracletrunc(x)]$ for all queries $x \in \domain$, and with probability at least $1-\delta$, the outputs of the algorithms $\alg(\oracle, n)$ and $\alg(\oracletrunc, n)$ are identical.
    
\end{theorem}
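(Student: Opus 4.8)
The plan is to build $\oracletrunc$ query-by-query from $\oracle$ by a two-step modification --- first \emph{clip} each response to a small ball about its mean, then \emph{restore the mean exactly} by relocating a negligible amount of probability mass to a single point --- and then to couple $\alg(\oracle,n)$ and $\alg(\oracletrunc,n)$ round by round so that they receive identical responses unless a rare clipping event fires; a union bound over the $n$ rounds finishes the proof.

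Concretely, I fix a query $x$, write $\mu=\E[\oracle(x)]$ and $Y=\norm{\oracle(x)-\mu}$, and set $\rho=\bound(x)/4=\scale(x)\,L^{1/\gamma(x)}$ with $L=\max\set{\log\tfrac{2c(x)n}{\delta},\,2/\gamma(x)}$, so that $(\rho/\scale(x))^{\gamma(x)}=L$. Let $T(z)=\mu+\min\set{1,\rho/\norm{z-\mu}}\,(z-\mu)$ be the radial clipping onto the ball of radius $\rho$ about $\mu$, so $\norm{T(z)-\mu}\le\rho$ for every $z$ and $\norm{T(z)-z}=(\norm{z-\mu}-\rho)_+$. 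Put $\xi=\E[(Y-\rho)_+]$ and $\lambda=\xi/\bound(x)$ (and $\lambda=0$ if $\xi=0$). I define $\oracletrunc(x)$ to return a fresh sample of $T(\oracle(x))$ with probability $1-\lambda$ and a fixed vector $v$ with probability $\lambda$, where $v$ is chosen so that $\E[\oracletrunc(x)]=\mu=\E[\oracle(x)]$ exactly. Since $\norm{\E[T(\oracle(x))]-\mu}=\norm{\E[T(\oracle(x))-\oracle(x)]}\le\xi$, solving the mean equation gives $\norm{v-\mu}=\norm{\E[T(\oracle(x))]-\mu}\,(1/\lambda-1)\le\bound(x)-\xi<\bound(x)$; together with $\norm{T(\oracle(x))-\mu}\le\rho\le\bound(x)$ this shows (once $\lambda\le1$ is checked, below) that $\oracletrunc$ is a $\bound$-bounded oracle with the same per-query mean as $\oracle$. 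It remains to show that $\lambda$ and $\Pr(Y>\rho)$ are both tiny.

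The estimate $\Pr(Y>\rho)\le c(x)\exp(-(\rho/\scale(x))^{\gamma(x)})=c(x)e^{-L}\le\delta/(2n)$ is immediate from the tail hypothesis and the definition of $\bound(x)$, using $L\ge\log\tfrac{2c(x)n}{\delta}$ (the degenerate case $2c(x)n<\delta$ is absorbed by the trivial bound $c(x)e^{-L}\le c(x)<\delta/(2n)$). For $\lambda$, I bound $\xi=\int_{\rho}^{\infty}\Pr(Y>s)\,ds\le c(x)\int_{\rho}^{\infty}\exp(-(s/\scale(x))^{\gamma(x)})\,ds$ and substitute $u=(s/\scale(x))^{\gamma(x)}$ to reduce this to $\tfrac{c(x)\scale(x)}{\gamma(x)}\int_{L}^{\infty}u^{1/\gamma(x)-1}e^{-u}\,du$; this is precisely where the requirement $L\ge2/\gamma(x)$ baked into $\bound(x)$ is used, as (treating $\gamma(x)\le1$ and $\gamma(x)>1$ separately and using $1+t\le e^{t}$) it makes the last integral at most $2\,L^{1/\gamma(x)-1}e^{-L}$. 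Hence $\xi\le\tfrac{2c(x)\scale(x)}{\gamma(x)}L^{1/\gamma(x)-1}e^{-L}$, so $\lambda=\xi/\bound(x)\le\tfrac{c(x)e^{-L}}{2\gamma(x)L}\le\tfrac{c(x)e^{-L}}{4}\le\delta/(8n)$ using $\gamma(x)L\ge2$ (in particular $\lambda\le1$). The per-query coupling --- draw one $Z\sim\oracle(x)$ and an independent $\mathrm{Ber}(\lambda)$ coin, return $Z$ for $\oracle$ and ($T(Z)$ if the coin is $0$, else $v$) for $\oracletrunc$ --- yields matching responses unless the coin is $1$ or $Y>\rho$, i.e.\ with disagreement probability at most $\lambda+\Pr(Y>\rho)<\delta/n$, uniformly over $x$.

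Finally I assemble the global coupling: run both executions on one probability space, and at round $i$ apply the per-query coupling with $x=x_i$, the common query issued so far. Let $E_i$ be the event that round $i$ is the first round on which the responses differ; on $E_i$ the two histories agree through round $i-1$, so $x_i$ is a single query measurable with respect to that history, and conditioning on the history the per-query estimate gives $\Pr(E_i)\le\delta/n$. Thus $\Pr(\bigcup_{i=1}^{n}E_i)\le\delta$, and on the complement the two response sequences are identical, hence so are the two outputs, since $\alg$ is deterministic. I expect the crux to be the mean-restoration step --- naive clipping biases the mean, which is what forces the point-mass correction and the verification $\norm{v-\mu}\le\bound(x)$ --- together with the incomplete-Gamma estimate for $\xi$, where the $\max\set{\cdot,\,2/\gamma}$ term is indispensable; the adaptive union bound is routine.
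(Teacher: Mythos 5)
Your proposal is correct and follows essentially the same route as the paper: construct, for each query, a bounded surrogate response that preserves the mean exactly and coincides with the original response with probability at least $1-\delta/n$, then couple the two executions round by round and union bound over the $n$ rounds, with the same incomplete-Gamma estimate ($\int_{L}^{\infty} u^{1/\gamma-1}e^{-u}\,du \le 2L^{1/\gamma-1}e^{-L}$ for $L \ge 2/\gamma$, the paper's Lemma 2) carrying the technical load. The only material difference is in how the per-query surrogate is built --- the paper's Lemma 1 rejection-samples the response onto the ball of radius $\xradius = \bound/4$ and repairs the bias with a point mass of probability $\delta/2$ at $-\tfrac{2-\delta}{\delta}\E[Z]$, whereas you clip radially and repair with a point mass of probability $\xi/\bound$ at a point within distance $\bound$ of the mean --- and both constructions give the same bound $\bound$, so the arguments are equivalent in substance.
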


In words, \cref{thm:alg-view} implies that for analyzing an algorithm that internally uses samples of light-tailed RVs, it is sufficient to analyze a simpler version of the algorithm that uses bounded RVs (with the bound given in \cref{eq:alg-view-bound}) \emph{with the same expectation as the original RVs}, and the conclusion of this analysis will equally apply to the original version of the algorithm since their outputs are identical (w.h.p.).
While the bound stated in \cref{eq:alg-view-bound} is in general tight (we discuss this below), a typical application of the theorem often comes at the cost of additional logarithmic factors in $n$ and $1/\delta$, since the magnitude of the bound $\bound$ (when treating $\gamma,\scale$ and $c$ as constants) is $\mathrm{polylog}(n/\delta)$.
We give a few concrete examples of how the theorem is applied in the section below, deferring its proof to \cref{sec:alg-view-proof}.

\subsection{Discussion and proof ideas}

To better appreciate \cref{thm:alg-view}, it is instructive to examine several straightforward strategies for transforming light-tailed RVs into bounded ones---strategies that could, in principle, serve as preprocessing steps before passing the randomization to the algorithm $\alg$ in place of the original unbounded RVs. 
Natural candidates for such transformations include: (i) conditioning on a high-probability ``good event'' under which all RVs are uniformly bounded; (ii) truncating extreme values via clipping; and (iii) employing rejection sampling to ensure boundedness.

However, there is a crucial issue with all of the aforementioned strategies: note that the main challenge in modifying the RVs is not merely in ensuring that they become bounded with high probability, which is in itself trivial; rather, the tricky part lies in doing so while remaining faithful to their original probability law as much as possible---and in particular, maintaining their original expectation (conditioned on any previous randomization). 
For instance, conditioning on a high-probability event where all variables lie within a bounded range may alter their expectations, thereby significantly affecting the algorithm $\alg$ and its analysis, which might be non-robust to such biases.

A similar issue arises with truncation,%
which can introduce non-negligible bias into the expectations of the RVs. Consider, for example, the case in which each RV is non-negative and has substantial mass near zero (e.g., a mixture of an exponential distribution and a point mass at zero). In this scenario, the lower clipping threshold must be at most zero, and naive truncation would clearly reduce the expectation of the RV. One might attempt to correct for this bias by shifting the truncated variable; however, this results in a distribution that no longer closely resembles the original one, and may lead to drastically different behavior of the algorithm $\alg$.

This consideration is, in fact, the core motivation for our work: to develop a general-purpose reduction that overcomes the limitations of these naive strategies and is applicable to \emph{any} algorithm and \emph{any} light-tailed distribution, thereby obviating the need for specialized, case-specific treatments.
Our approach leverages an argument based on rejection sampling (given in \cref{lem:truncation}) that, given a light-tailed variable $X$, simulates a bounded variable $\wt X$ such that (i) $\E[\wt X] = \E[X]$ and (ii) $\wt X = X$ with high probability.
With these properties, we can employ the argument iteratively while feeding the randomization into the generic algorithm $\alg$ and reason that (with high probability) its output remains unaffected by the preprocessing without compromising guarantees that rely on the expectation of the bounded random variable.
For more details, refer to \cref{sec:alg-view-proof} below.

\section{Examples}
\label{sec:examples}
\cref{thm:alg-view} can be used to extend existing results such as measure concentration bounds and analyses of stochastic approximation algorithms, that are often proven under bounded noise assumptions, to support $(\gamma,\scale,c)$-tailed noise.
Below we provide several such examples: Azuma's inequality, convergence of stochastic gradient descent, regret analysis of upper confidence bound (UCB) algorithm for stochastic multi-armed bandits, and bounding the maximum of sub-Gaussian random variables.
\subsection{A prelude: Azuma's inequality}
\label{sec:azume}

The first example that comes to mind is an extension of Azuma's inequality for martingales that allows for general light-tailed increments.  We will work out this example here mainly for illustration of the basic technique: in actual applications (such as the one in the subsequent section, being perhaps more representative), one can simply apply our main theorem to the analyzed algorithm as a ``black box'' rather than using it to analyze a particular martingale involved in the analysis.

Following is a standard formulation of Azuma's inequality (see e.g., \citealp{dubhashi2009concentration, boucheron2013concentration}).
\begin{theorem}[Azuma's Inequality]
\label{thm:azuma}
Let $Z_1,\ldots,Z_n$ be a martingale difference sequence (MDS) and suppose there is a constant $\bound$ such that $\abs{Z_i}\leq \bound$ for all $0 \leq i < n$ almost surely. Then
\begin{align*}
    \Pr\brk*{ \sum_{i=1}^n Z_i \geq \bound \sqrt{2 n \log\frac{1}{\delta}}} \leq \delta,
\end{align*}
\end{theorem}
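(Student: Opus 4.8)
The plan is the classical exponential-moment (Chernoff) argument, with the martingale structure handled by peeling off one increment at a time via the tower rule. Write $S_n = \sum_{i=1}^n Z_i$ and let $\filtration_i = \sigma(Z_1,\dots,Z_i)$. For any $\lambda > 0$, Markov's inequality applied to the nonnegative variable $e^{\lambda S_n}$ gives $\Pr(S_n \geq t) \leq e^{-\lambda t}\,\E[e^{\lambda S_n}]$, so the whole task reduces to upper bounding the moment generating function $\E[e^{\lambda S_n}]$.

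To control the MGF, condition on the past and peel off the last term:
\[
    \E[e^{\lambda S_n}] = \E\!\left[e^{\lambda S_{n-1}}\,\E\!\left[e^{\lambda Z_n}\mid\filtration_{n-1}\right]\right].
\]
The inner conditional MGF is bounded by Hoeffding's lemma: conditionally on $\filtration_{n-1}$ the increment $Z_n$ has mean zero (the MDS property) and lies in an interval of length at most $2\bound$, hence $\E[e^{\lambda Z_n}\mid\filtration_{n-1}] \leq e^{\lambda^2\bound^2/2}$ almost surely. Substituting this and iterating the same step for $Z_{n-1},\dots,Z_1$ yields $\E[e^{\lambda S_n}] \leq e^{n\lambda^2\bound^2/2}$, and therefore $\Pr(S_n \geq t) \leq \exp(-\lambda t + n\lambda^2\bound^2/2)$ for every $\lambda>0$. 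Minimizing the exponent over $\lambda$ (the optimum is $\lambda = t/(n\bound^2)$) gives $\Pr(S_n\geq t)\leq \exp(-t^2/(2n\bound^2))$, and choosing $t = \bound\sqrt{2n\log(1/\delta)}$ makes the right-hand side equal to $\delta$, which is exactly the claimed inequality.

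The only step that is more than bookkeeping is Hoeffding's lemma, so that is where the real work sits (in practice one simply cites it). Its proof is a short one-variable convexity estimate: for $X$ with $\E[X]=0$ supported in $[a,b]$, set $\psi(\lambda)=\log\E[e^{\lambda X}]$; then $\psi(0)=\psi'(0)=0$, and $\psi''(\lambda)$ is the variance of $X$ under the exponentially tilted law, which is again supported in $[a,b]$ and hence is at most $(b-a)^2/4$, so $\psi''(\lambda)\leq (b-a)^2/4$ everywhere; a second-order Taylor expansion then gives $\psi(\lambda)\leq \lambda^2(b-a)^2/8$, which is the bound used above with $b-a=2\bound$. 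Everything else — the tower rule, the iteration, and the scalar optimization over $\lambda$ — is routine, so I would expect a very short write-up.
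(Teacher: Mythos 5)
Your proof is correct: it is the standard Chernoff--Hoeffding argument (Markov on the exponential moment, tower rule to peel off increments, Hoeffding's lemma for each conditional MGF, then optimize $\lambda$), and all the constants check out. The paper does not actually prove this theorem---it cites it as a standard result from the concentration-of-measure literature---and your write-up is precisely the textbook proof those references give, so there is nothing to reconcile.
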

Using \cref{thm:alg-view}, we can easily extend Azuma's inequality to $(\gamma,\scale,c)$-tailed random variables:
\begin{theorem}[Azuma's Inequality for light-tailed RVs]
\label{thm:azuma-tailed}
    Let $Z_1,\ldots,Z_n$ be an MDS and suppose there are constants $\gamma,\scale,c > 0$ such that, deterministically,%
    \footnote{To avoid nuisances emerging from zero-probability events, we assume here that the condition holds deterministically, for \emph{any realization} of the RVs $Z_1,\ldots,Z_{i-1}$; namely, that $Z_i$ is $(\gamma,\scale,c)$-tailed given any possible history of the martingale.}
    \begin{align*}
        \Pr\brk!{\abs{Z_i} \geq t ~\big|~ Z_1,\ldots,Z_{i-1}}
        &\leq
        c \cdot \exp\brk{-(t/\scale)^{\gamma} }
    \end{align*}%
    for all $0 \leq i < n$ and $t \geq 0$.
    Then with probability at least $1-2\delta$,
    \begin{align*}
        \sum_{i=1}^n Z_i \leq
        \scale \sqrt{32 n \log\frac{1}{\delta}} 
        \, 
        \brk*{\max\set*{\log \frac{2 c n}{\delta},\frac{2}{\gamma}}}^{1/\gamma}
        .
    \end{align*}
    \end{theorem}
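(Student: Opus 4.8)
The plan is to realize the partial sums of the martingale as the output of a trivial meta-algorithm interacting with a sampling oracle, and then apply \cref{thm:alg-view} as a black box. Take the query space $\domain$ to be the set of all finite real sequences of length at most $n-1$, and define the oracle $\oracle$ so that on a query $x=(z_1,\dots,z_{i-1})$ it returns a fresh sample from a regular conditional distribution of $Z_i$ given $Z_1=z_1,\dots,Z_{i-1}=z_{i-1}$ (on histories outside the support of $(Z_1,\dots,Z_{i-1})$, declare $\oracle(x)$ to be the constant $0$). By the deterministic tail hypothesis $\oracle(x)$ is $(\gamma,\scale,c)$-tailed for every $x$ (trivially so off the support, noting that the hypothesis at $t=0$ already forces $c\ge1$), and by the martingale-difference property together with the off-support convention, $\E[\oracle(x)]=0$ for all $x\in\domain$. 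Let $\alg$ be the meta-algorithm that in round $i=1,\dots,n$ queries the sequence $x_i$ of responses received so far, obtains $R_i\eqdef\oracle(x_i)$, and after $n$ rounds outputs $\sum_{i=1}^n R_i$. Since the law of a process is determined by its successive conditional laws, $(R_1,\dots,R_n)$ has the same joint distribution as $(Z_1,\dots,Z_n)$, so $\alg(\oracle,n)$ outputs a copy of $\sum_{i=1}^n Z_i$.

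Now invoke \cref{thm:alg-view} with this $\oracle$, $\alg$ and $n$: it yields a $\bound$-bounded oracle $\oracletrunc$ with $\bound(x)=4\scale\brk*{\max\set*{\log\tfrac{2cn}{\delta},\tfrac{2}{\gamma}}}^{1/\gamma}$ independent of $x$ (henceforth denoted simply $\bound$), with $\E[\oracletrunc(x)]=\E[\oracle(x)]=0$ for every $x$, and such that with probability at least $1-\delta$ the outputs of $\alg(\oracle,n)$ and $\alg(\oracletrunc,n)$ are identical. Let $\widetilde Z_1,\dots,\widetilde Z_n$ be the oracle responses produced in the run $\alg(\oracletrunc,n)$, so that this run outputs $\sum_{i=1}^n\widetilde Z_i$. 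I claim $\widetilde Z_1,\dots,\widetilde Z_n$ is again a bounded MDS: because an oracle answers the issued query using fresh randomness, $\E[\widetilde Z_i\mid\widetilde Z_1,\dots,\widetilde Z_{i-1}]$ equals $\E[\oracletrunc(x)]$ evaluated at the (random) history $x=(\widetilde Z_1,\dots,\widetilde Z_{i-1})$, which is $0$; and since $\oracletrunc$ is $\bound$-bounded with vanishing mean, $\abs{\widetilde Z_i}\le\bound$ almost surely.

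Applying the standard Azuma inequality (\cref{thm:azuma}) to this bounded MDS gives $\Pr\brk*{\sum_{i=1}^n\widetilde Z_i\ge\bound\sqrt{2n\log\tfrac1\delta}}\le\delta$, and a one-line computation identifies $\bound\sqrt{2n\log\tfrac1\delta}$ with the threshold $\tau\eqdef\scale\sqrt{32n\log\tfrac1\delta}\brk*{\max\set*{\log\tfrac{2cn}{\delta},\tfrac2\gamma}}^{1/\gamma}$ appearing in the statement. Finally, on the event that the two runs agree we have $\sum_iR_i\ge\tau$ if and only if $\sum_i\widetilde Z_i\ge\tau$, so a union bound gives $\Pr\brk*{\sum_{i=1}^nR_i\ge\tau}\le\Pr\brk*{\sum_{i=1}^n\widetilde Z_i\ge\tau}+\delta\le2\delta$; since $\sum_iR_i$ and $\sum_iZ_i$ have the same law, this is exactly the claimed inequality. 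The step I expect to require the most care is verifying that the truncation supplied by \cref{thm:alg-view} preserves both the martingale-difference structure and the zero conditional mean of the sequence — this is precisely where the mean-preservation guarantee $\E[\oracletrunc(x)]=\E[\oracle(x)]$ and the ``fresh randomness per query'' semantics of the oracle model do the work.
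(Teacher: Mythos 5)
Your proposal is correct and follows essentially the same route as the paper: encode the martingale as a prefix-querying oracle (with the zero-response convention off the support), apply \cref{thm:alg-view} to pass to a $\bound$-bounded mean-preserving oracle, check that the truncated responses form a bounded MDS, apply \cref{thm:azuma}, and finish with a union bound over the event that the two runs coincide. The only cosmetic difference is that your meta-algorithm outputs the sum itself rather than the Boolean indicator used in the paper, which changes nothing in the argument.
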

We can compare this result to a direct proof of Azuma's inequality with sub-Gaussian increments, which can be found e.g.~in Theorem 2 of \citet{shamir2011variant}. 
Their bound is of order $\scale \sqrt{n \log(1/\delta)}$ whereas our result, which is obtained in a black-box manner and applies more generally to any $(\gamma,\scale,c)$-tailed differences, suffers an additional $\sqrt{\log(n/\delta)}$ factor in the sub-Gaussian case.
\begin{proof}[Proof of \cref{thm:azuma-tailed}]
Let $\gamma,\scale,c > 0$ be given and fixed.
To obtain the result we simply need to rephrase Azuma's inequality in the setup of \cref{thm:alg-view}.
To do so, imagine an artificial algorithm $\alg(\oracle,n)$ that interacts with a sampling oracle $\oracle$, that given any possible prefix of the martingale $z_{1:i-1}$, generates the next element in the sequence.  Namely, given any realization $z_{1:i-1}$ of the RVs $Z_{1:i-1}$, the oracle call $\oracle(z_{1:i-1})$ samples $z_i$ from the conditional distribution of $Z_i$ given $Z_1=z_1,\ldots,Z_{i-1}=z_{i-1}$ (and returns $0$ if $z_{1:{i-1}}$ is not a possible realization of the history).
Given the responses $Z_1,\ldots,Z_n$ of the oracle after $n$ interactions with the algorithm $\alg(\oracle,n)$, the algorithm returns ``True'' if the following condition holds (and ``False'' otherwise):
\begin{align*}
    \sum_{i=1}^n Z_i
    \geq 
    \scale \sqrt{32 n \log\frac{1}{\delta}}
    \, 
    \brk*{\max\set*{\log \frac{2 c n}{\delta},\frac{2}{\gamma}}}^{1/\gamma}
    .
\end{align*}

Let us analyze this algorithm using \cref{thm:alg-view}.
Note that the oracle $\oracle$ is $(\gamma,\scale,c)$-tailed by construction, and so we can use the theorem to obtain that there exists a $\bound$-bounded oracle $\oracletrunc$ with
\begin{align*}
    \bound 
    = 
    4 \scale \brk*{\max\set*{\log \frac{2 c n}{\delta},\frac{2}{\gamma}}}^{1/\gamma} 
    ,
\end{align*}
such that $\E[\wt\oracle(z_{0:i-1})] = \E[\oracle(z_{0:i-1})] = 0$ for all queries $z_{0:i-1}$ (by the martingale property for realizable queries and the definition of $\oracle$ that returns $0$ if $z_{0:i-1}$ is not a possible realization). 
Thus, denoting the responses of the oracle $\wt\oracle$ (received by the algorithm $\alg(\wt\oracle,n)$) by $\wt Z_1, \ldots, \wt Z_n$, we have, for all $i$,
\begin{align*}
    \E[\wt Z_i \mid \wt Z_{0:i-1}]
    &=
    \E[\wt\oracle(\wt Z_{0:i-1}) \mid \wt Z_{0:i-1}]
    =
    \E[\oracle(\wt Z_{0:i-1}) \mid \wt Z_{0:i-1}]
    =
    0
    .
\end{align*}
In other words, $\wt Z_1, \ldots, \wt Z_n$ is an MDS with differences bounded by $\bound$, to which we can apply the standard version of Azuma's inequality (\cref{thm:azuma}) and obtain
\begin{align*}
    \Pr\brk[s]{\alg(\oracletrunc,n) \!=\! \text{``True''}}
    & \!=\!
    \Pr\brk[s]*{
        \sum_{i=1}^n \ztrunc_i
        \geq
        \bound \sqrt{2 n \log\frac{1}{\delta}} 
    }
    \! \leq
    \delta
    .
\end{align*}
On the other hand, we also know that with probability at least $1-\delta$, the outputs of $\alg(\oracle,n)$ and $\alg(\oracletrunc,n)$ are the same.
Thus,
\begin{align*}
    \Pr\brk[s]{\alg(\oracle,n)=\text{``True''}}
    \leq
    \Pr\brk[s]{\alg(\oracletrunc,n)=\text{``True''}} + \delta
    .
\end{align*}
Combining the two inequalities, we conclude that
\begin{align*}
    \Pr&\brk[s]*{
        \sum_{i=1}^n Z_i
        \geq 
        \scale \sqrt{32 n \log \frac{1}{\delta}} 
        \, 
        \brk*{\max\set*{\log \frac{2 c n}{\delta},\frac{2}{\gamma}}}^{1/\gamma}
    }
    =
    \Pr[\alg(\oracle,n)=\text{``True''}]
    \leq
    2\delta
    .
    \qedhere
\end{align*}
\end{proof}

\subsection{Stochastic first-order optimization}
\label{subsec:sgd-example}

Another scenario where analysis with light-tailed noise is common is in stochastic optimization with stochastic gradient descent (SGD).  As a canonical example, consider a problem where we wish to (approximately) find a minimizer of a convex and differentiable function $f(x)$, with access only to unbiased stochastic gradients, that is, to a randomized gradient oracle $\gradoracle(x)$ such that $\E[\gradoracle(x) ] = \nabla f(x)$ for all $x$ in the domain.
Common light-tailed assumptions on the gradient oracle include:
\begin{enumerate}[label=(\roman*)]
    \item
    $\bound$-bounded gradient oracle, for a fixed $\bound > 0$;

    \item
    Sub-Gaussian gradient oracle, i.e., $(2,\scale,2)$-tailed for some constant $\scale>0$;

    \item A more general bounded gradient oracle with ``affine variance''~\citep{faw2022power,attia2023sgd}, i.e., $\bound$-bounded with $\bound(x)=\sqrt{\sigma_0^2 + \sigma_1^2 \norm{\nabla f(x)}{}^2}$ for some $\sigma_0,\sigma_1 > 1$, or a sub-Gaussian variant where  the gradient oracle is $(2,\scale,2)$-tailed with
    \begin{align*}
        \scale(x) 
        = 
        \sqrt{\sigma_0^2 + \sigma_1^2 \norm{\nabla f(x)}^2}.
    \end{align*}
\end{enumerate}
Naturally, the simplest assumption to handle is bounded noise.
Below we provide a standard convergence result for SGD with bounded noise and a corollary derived using \cref{thm:alg-view} to extend the oracle assumption to $(\gamma,\scale,c)$-tailed oracles.

All following definitions use the Euclidean norm, denoted $\norm{\cdot}_2$.
Let $\domain \subset \reals^d$ be a convex set with diameter $\diam$,%
\footnote{We intentionally overload notation here, since $\domain$ also serves as the space of queries in this example.}
and let $f : \domain \to \reals$ be a convex, differentiable and $\lip$-Lipschitz function that admits a minimizer $x^\star \in \arg \min_{x \in \domain} f(x)$.
Given an initial point $x_1 \in \domain$ and a stepsize parameter $\eta>0$, the update step of (projected) stochastic gradient descent is $x_{i+1}=\proj{x_i-\eta g_i}$ where $g_i$ is a stochastic gradient at $x_i$ and $\proj{y} = \arg \min_{x \in \domain} \norm{x-y}_2$ is the Euclidean projection to $\domain$.
The convergence result is as follows (proof is in \cref{sec:sgd}).

\begin{restatable}{theorem}{thmsgd}
    \label{thm:sgd-bounded}
    Let $\gradoracle(x)$ be a $\bound$-bounded gradient oracle of $f$, i.e., such that $\E[\gradoracle(x)] = \nabla f(x)$ and $\norm{\gradoracle(x)-\nabla f(x)}_2 \leq \bound$ for all $x \in \domain$, for some $\bound>0$. 
    Let $x_1,\ldots,x_n$ be the iterates of $n$-steps SGD with stepsize
        $\eta
        =
        \ifrac{\diam}{\sqrt{2 (\lip^2+\bound^2) n}}
        .$
    Then for any $\delta \in (0,1)$, with probability at least $1-\delta$ it holds for $\overline x = \frac{1}{n} \sum_{i=1}^n x_i$ that
    \begin{align*}
        f\brk*{\overline x} - f^\star
        &\leq
        2 \diam \sqrt{\frac{\lip^2+\bound^2}{n}}
        + \diam \bound \sqrt{\frac{2 \log (1/\delta)}{n}} 
        .
    \end{align*}    
\end{restatable}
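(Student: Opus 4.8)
The plan is to run the standard high-probability analysis of projected SGD for convex, $\lip$-Lipschitz functions, in which the only probabilistic input is the bounded-difference Azuma inequality (\cref{thm:azuma}) applied to a single martingale appearing in the regret decomposition. Throughout, write $g_i \eqdef \gradoracle(x_i)$ for the stochastic gradient used in step $i$, decompose it as $g_i = \nabla f(x_i) + \xi_i$ with noise $\xi_i \eqdef g_i - \nabla f(x_i)$, and let $\filtration_i$ be the $\sigma$-algebra generated by $g_1,\dots,g_i$; then $x_{i+1}$ is $\filtration_i$-measurable, $\E[\xi_i \mid \filtration_{i-1}] = 0$, and $\norm{\xi_i}_2 \le \bound$ almost surely by the bounded-oracle assumption.

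First I would establish the one-step inequality. By nonexpansiveness of the Euclidean projection onto the convex set $\domain$ and since $x^\star \in \domain$, expanding $\norm{x_{i+1}-x^\star}_2^2 \le \norm{x_i - \eta g_i - x^\star}_2^2$ and rearranging gives $\langle g_i, x_i - x^\star\rangle \le \frac{1}{2\eta}\bigl(\norm{x_i-x^\star}_2^2 - \norm{x_{i+1}-x^\star}_2^2\bigr) + \frac{\eta}{2}\norm{g_i}_2^2$. Combining this with convexity of $f$ (which gives $f(x_i) - f^\star \le \langle \nabla f(x_i), x_i - x^\star\rangle = \langle g_i, x_i - x^\star\rangle + \langle \xi_i, x^\star - x_i\rangle$) yields
\[
    f(x_i) - f^\star
    \le
    \frac{\norm{x_i-x^\star}_2^2 - \norm{x_{i+1}-x^\star}_2^2}{2\eta}
    + \frac{\eta}{2}\norm{g_i}_2^2
    + \langle \xi_i,\, x^\star - x_i\rangle
    .
\]
Summing over $i=1,\dots,n$, the first terms telescope to at most $\diam^2/(2\eta)$ (the domain has diameter $\diam$), and $\norm{g_i}_2^2 = \norm{\nabla f(x_i)+\xi_i}_2^2 \le 2\norm{\nabla f(x_i)}_2^2 + 2\norm{\xi_i}_2^2 \le 2(\lip^2+\bound^2)$ since $f$ is $\lip$-Lipschitz. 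Hence
\[
    \sum_{i=1}^n \bigl(f(x_i)-f^\star\bigr)
    \le
    \frac{\diam^2}{2\eta} + \eta n(\lip^2+\bound^2) + M_n,
    \qquad
    M_n \eqdef \sum_{i=1}^n \langle \xi_i,\, x^\star - x_i\rangle
    .
\]

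The term $M_n$ is where the high-probability guarantee enters. Since $x_i,x^\star$ are $\filtration_{i-1}$-measurable and $\E[\xi_i\mid\filtration_{i-1}]=0$, the summands $\langle \xi_i, x^\star - x_i\rangle$ form a martingale difference sequence, and by Cauchy--Schwarz together with the diameter bound $\norm{x^\star - x_i}_2 \le \diam$ they satisfy $\abs{\langle \xi_i, x^\star - x_i\rangle} \le \bound\diam$ almost surely. Applying \cref{thm:azuma} with the bound $\bound\diam$ in place of $\bound$ gives $M_n \le \bound\diam\sqrt{2n\log(1/\delta)}$ with probability at least $1-\delta$. On that event, dividing the previous display by $n$ and using Jensen's inequality, $f(\overline x) \le \frac1n\sum_{i=1}^n f(x_i)$, yields
\[
    f(\overline x) - f^\star
    \le
    \frac{\diam^2}{2\eta n} + \eta(\lip^2+\bound^2) + \bound\diam\sqrt{\frac{2\log(1/\delta)}{n}}
    .
\]
Substituting $\eta = \diam/\sqrt{2(\lip^2+\bound^2)n}$ balances the first two terms, each equal to $\tfrac{1}{\sqrt2}\diam\sqrt{(\lip^2+\bound^2)/n}$, so their sum is $\sqrt2\,\diam\sqrt{(\lip^2+\bound^2)/n} \le 2\diam\sqrt{(\lip^2+\bound^2)/n}$, which is the claimed bound.

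I do not anticipate a genuine obstacle: the argument is entirely standard and the only points needing care are (i) fixing the filtration so that $\langle \xi_i, x^\star - x_i\rangle$ is truly a martingale difference sequence with $\filtration_{i-1}$-measurable ``weights'' $x_i$, and (ii) verifying that the increments are bounded by $\bound\diam$ \emph{almost surely} rather than merely in expectation, which is exactly what \cref{thm:azuma} requires; the remainder is telescoping and substituting the stepsize. It is worth noting that this theorem plays the role of the bounded-noise base case to which \cref{thm:alg-view} is subsequently applied to obtain the $(\gamma,\scale,c)$-tailed corollary, so keeping the dependence of the bound on $\bound$ explicit and linear is the main thing to track through the calculation.
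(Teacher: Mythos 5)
Your proposal is correct and follows essentially the same route as the paper's proof: the same projection/nonexpansiveness one-step inequality, the same telescoping sum with $\norm{g_i}_2^2 \leq 2(\lip^2+\bound^2)$, and the same application of the bounded-difference Azuma inequality to the martingale $\sum_i (\nabla f(x_i)-g_i)\cdot(x_i-x^\star)$ with increments bounded by $\diam\bound$. The only (cosmetic) difference is that you invoke convexity per step before summing, whereas the paper applies convexity and Jensen once at the end; the constants work out identically.
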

As the gradient oracle of \cref{thm:sgd-bounded} is $\bound$-bounded, \cref{thm:alg-view} can be directly applied (note that since the new oracle preserves the same expectation, it is also an unbiased gradient oracle) and we immediately obtain the following by replacing $\bound$ with $\widetilde{\bound} = 4 \scale \brk!{\max\set{\log\tfrac{2 c n}{\delta},\tfrac{2}{\gamma}}}^{1/\gamma}$:
\begin{corollary}
    \label{thm:sgd-tailed}
    Assume that $\gradoracle(x)$ is a $(\gamma,\scale,c)$-tailed gradient oracle of $f$, namely $\E[\gradoracle(x)] = \nabla f(x)$ and for all $x \in \domain$
    \begin{align*}
        \Pr\brk{\norm{\gradoracle(x)-\nabla f(x)}_2 \geq t}
        \leq
        c \cdot \exp(-(t/\scale)^\gamma)
        .
    \end{align*}
    Let $\widetilde{\bound} = 4 \scale \brk!{\max\set{\log\tfrac{2 c n}{\delta},\tfrac{2}{\gamma}}}^{1/\gamma}$ and let $x_1,\ldots,x_n$ be the iterations of SGD with stepsize
    \begin{align*}
        \eta
        =
        \frac{\diam}{\sqrt{2 (\lip^2+{\widetilde\bound}^2) n}}
        .
    \end{align*}
    Then for any $\delta \in (0,\tfrac{1}{2})$, with probability at least $1-2 \delta$ it holds for $\overline x = \frac{1}{n} \sum_{i=1}^n x_i$ that
    \begin{align*}
        f\brk*{\overline x} - f^\star
        &\leq
        \diam \widetilde{\bound} \sqrt{\frac{2 \log (1/\delta)}{n}} + 2 \diam \sqrt{\frac{\lip^2+{\widetilde\bound}^2}{n}}
        =
        O
        \brk*{
            \frac{\diam \lip}{\sqrt{n}}
            +
            \frac{\diam \scale}{\sqrt{n}} 
            \brk2{\log\frac{1}{\delta}}^{\tfrac12} 
            \brk2{\log\frac{n}{\delta}}^{\tfrac1\gamma}
        }
        ,
    \end{align*}
    where the asymptotic notation treats $\gamma$ and $c$ as constants.
\end{corollary}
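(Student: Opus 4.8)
The plan is to mirror the black-box reduction used in the proof of \cref{thm:azuma-tailed}, now applied to the SGD procedure as a whole rather than to an auxiliary martingale. View $n$-step projected SGD with the tailed gradient oracle $\gradoracle$ as a deterministic meta-algorithm $\alg(\gradoracle,n)$: in round $i$ it issues the query $x_i \in \domain$, receives the sample $g_i = \gradoracle(x_i)$, and updates $x_{i+1} = \proj{x_i - \eta g_i}$; its output is the iterate sequence $x_1,\dots,x_n$ (and hence also the average $\overline x = \tfrac1n\sum_i x_i$, which is a fixed function of this sequence). By assumption $\gradoracle$ is a $(\gamma,\scale,c)$-tailed oracle with constant parameter functions, so \cref{thm:alg-view} applies and produces a $\widetilde{\bound}$-bounded oracle $\oracletrunc$ with $\widetilde{\bound} = 4\scale\brk!{\max\set{\log\tfrac{2cn}{\delta},\tfrac{2}{\gamma}}}^{1/\gamma}$, satisfying $\E[\oracletrunc(x)] = \E[\gradoracle(x)] = \nabla f(x)$ for all $x$ (so $\oracletrunc$ is again an unbiased gradient oracle), and such that with probability at least $1-\delta$ the outputs of $\alg(\gradoracle,n)$ and $\alg(\oracletrunc,n)$ coincide.

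Next I would invoke the bounded-noise guarantee on the truncated run. Since $\oracletrunc$ is a $\widetilde{\bound}$-bounded, unbiased gradient oracle of $f$ and the stepsize $\eta = \diam/\sqrt{2(\lip^2+\widetilde{\bound}^2)n}$ is precisely the one prescribed by \cref{thm:sgd-bounded} for a $\widetilde{\bound}$-bounded oracle, that theorem gives that, with probability at least $1-\delta$, the average $\overline x$ of the iterates of $\alg(\oracletrunc,n)$ obeys
\begin{align*}
    f(\overline x) - f^\star
    \le
    2\diam\sqrt{\frac{\lip^2+\widetilde{\bound}^2}{n}}
    + \diam\widetilde{\bound}\sqrt{\frac{2\log(1/\delta)}{n}}
    .
\end{align*}
Taking a union bound over this event and the event that the two runs have identical outputs shows that the same inequality holds for the average of the \emph{original} SGD iterates with probability at least $1-2\delta$, which is exactly the first displayed bound in the corollary.

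Finally I would simplify the asymptotics: treating $\gamma$ and $c$ as constants, $\widetilde{\bound} = O\brk!{\scale\brk!{\log(n/\delta)}^{1/\gamma}}$, and using $\sqrt{\lip^2+\widetilde{\bound}^2}\le \lip+\widetilde{\bound}$ and collecting terms yields
\begin{align*}
    f(\overline x) - f^\star
    = O\brk*{\frac{\diam\lip}{\sqrt n} + \frac{\diam\scale}{\sqrt n}\brk2{\log\tfrac{1}{\delta}}^{1/2}\brk2{\log\tfrac{n}{\delta}}^{1/\gamma}},
\end{align*}
as claimed. There is essentially no difficult step here: all the analytic work is carried by \cref{thm:alg-view,thm:sgd-bounded}. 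The only points that require a moment of care are the bookkeeping of the two $\delta$-failure events (hence the $1-2\delta$ guarantee, which is why we restrict to $\delta\in(0,\tfrac12)$) and the observation that $\overline x$, being a deterministic function of the (with high probability identical) iterate sequences, inherits the guarantee when we pass from the truncated oracle back to the original light-tailed one.
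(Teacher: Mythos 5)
Your proposal is correct and follows exactly the paper's intended argument: apply \cref{thm:alg-view} to SGD viewed as a meta-algorithm interacting with the gradient oracle, invoke \cref{thm:sgd-bounded} on the resulting $\widetilde{\bound}$-bounded run, and union-bound the two failure events to get the $1-2\delta$ guarantee. The paper states this reduction in one sentence; your write-up just spells out the same steps in more detail.
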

Similarly, we can extend results that use \emph{affine} noise such as Theorem 3.1 of \citet{liu2023near}, and Theorems 1 and 2 of \citet{attia2023sgd} to a more general $(\gamma,\scale,c)$-tailed  \emph{affine} noise model, as the tail parameters in our formulation may depend on the queried points.

\subsection{Stochastic multi-armed bandits}
The classical stochastic multi-armed bandit problem is yet another example where different light-tailed assumptions are often used. The canonical Upper Confidence Bound (UCB) algorithm of \citet{auer2002finite} for stochastic bandits was originally analyzed for bounded reward distributions supported on $[0,1]$, but was later extended~\citep[see e.g.,][]{bubeck2013bandits,bubeck2012regret,lattimore2020bandit}. Here we demonstrate how \cref{thm:alg-view} can be used to directly extend the standard guarantee of UCB assuming $[0,1]$-supported rewards, through a simple black-box reduction, to more general rewards that follow any $(\gamma,\sigma,c)$-tailed distribution.

A basic formulation of $K$-armed bandits problem consists of a set of arms $[K] = \{1,\ldots,K\}$. Each arm $k \in [K]$ is associated with a reward distribution $\cD_k$ with support $[0,1]$. We denote the expected reward of an arm $k \in [K]$ as $\mu_k=\E_{r \sim \cD_k}[r]$.
At each round $i$ of an $n$-rounds $K$-armed bandit problem, an algorithm selects an arm $k_i$ and receives an independent reward $r_i \sim \cD_{k_i}$. 
The regret is the expected reward difference between the best arm and the arms selected by the algorithm, which is formally defined as
    $R(n) \eqdef \mu^\star n - \sum_{i=1}^n \mu_{k_i},$
where $\mu^\star \eqdef \mu(k^\star)$.
Let $n_k(i)$ be the number of times arm $k$ was selected from rounds $1,\ldots,i-1$ (without including round $i$) and let $\hat{\mu}_k(i)$ be the empirical mean of rewards of those rounds. For a parameter $\delta \in (0,1)$, the UCB strategy \citep{auer2002finite} use the following quantity,
\begin{align*}
    \ucb_k(i,\delta)
    &=
    \begin{cases}
        \infty & \text{if $n_k(i)=0$} \\
        \hat{\mu}_k(i) + \sqrt{\frac{\log (2 n^2/\delta)}{2 n_k(i)}} & \text{otherwise}.
    \end{cases}
\end{align*}
The UCB$(\delta)$ algorithm selects at step $i \in \set{1,\ldots,n}$ the arm
\(
    k_i = \argmax_{k \in [K]} \ucb_k(i,\delta)
\)
and receives reward $r_i$.
Following is the high-probability guarantee for UCB$(\delta)$.\footnote{This is a slightly more general variant of \citet{auer2002finite} which fixed $\delta=\ifrac{2}{n^2}$ to bound the expected regret.}
\begin{theorem}\label{thm:ucb}
    Let $\delta \in (0,1)$, $n \geq K > 1$, and for any $k \in [K]$ let $\cD_k$ be a reward distribution with $[0,1]$ support. Then with probability at least $1- \delta$, the regret of running UCB($\delta$) after $n$ rounds is bounded by
    \begin{align*}
        O \brk*{\sum_{k : \mu_k < \mu^\star} \frac{\log(n/\delta)}{\Delta_k}
        }
        ,
        \quad\text{where}\quad \Delta_k=\mu^\star-\mu_k.
    \end{align*}
\end{theorem}
\begin{proof}[Proof sketch]
The proof can be extracted from the standard analysis of UCB, e.g., Theorem 1.15 of \citet{slivkins2019introduction}, leaving $\delta$ as a free parameter rather than fixing $\delta = \frac{2}{n^2}$.
From Eq.~1.14 of \citet{slivkins2019introduction}, it holds with probability at least $1-\delta$ \citep[The good event, Eq.~1.6 of][]{slivkins2019introduction}  that
for all $i=2,3,\ldots,n$:
\(
    \Delta_{k_i} \leq \sqrt{\frac{2 \log (2 n^2/\delta)}{n_k(i)}}.
\)
Rearranging and setting $i=n$, with probability at least $1-\delta$, $n_k(n) \leq \ifrac{2 \log(2 n^2/\delta)}{\Delta_k^2}$ for all $k \in [K]$.
Summing over all sub-optimal arms, we obtain that with probability at least $1-\delta$,
\begin{align*}
    R(n) &= \sum_{k : \mu_k < \mu^\star} \Delta_k \cdot n_k(n+1)
    \leq
    O\brk*{\sum_{k : \mu_k < \mu^\star} \frac{\log(n/\delta)}{\Delta_k}+ \sum_{k=1}^K \Delta_k}
    = O\brk*{\sum_{k : \mu_k < \mu^\star} \frac{\log(n/\delta)}{\Delta_k}},
\end{align*}
where the last transition uses the fact that $\Delta_k \leq 1$ for all $k \in [K]$.
\end{proof}

Next, we present the generalization to $(\gamma,\sigma,c)$-tailed distributions using \cref{thm:alg-view}. Since \cref{thm:ucb} uses reward distributions with $[0,1]$ support and not general $\bound$-bounded noise, a simple algorithmic reduction of scaling the rewards is required, where at each step $i$, UCB$(\delta)$ will observe a modified reward $r'_{i} \eqdef (r_{i}+\bound)/(2\bound+1)$.
\begin{theorem}\label{thm:ucb-general}
    Let $\delta \in (0,1)$, $n \geq K > 1$, for any $k \in [K]$ let $\cD_k$ be a reward distribution such that $r \sim \cD_k$ is $(\gamma,\sigma,c)$-tailed and $\mu_k \in [0,1]$,
    and let
    \(
        \bound = 4 \scale \brk2{\max\set2{\log \tfrac{2 c n}{\delta},\tfrac{2}{\gamma}}}^{1/\gamma}.
    \)
    Then with probability at least $1-2 \delta$, the regret of running UCB($\delta$) after $n$ rounds, when observing the modified rewards $r'_{1},\ldots,r'_{n}$, is bounded by
    \begin{align*}
        O \brk2{ \! \brk2{1\!+\!\scale^2 \brk2{\max\set2{\log \frac{2 c n}{\delta},\frac{2}{\gamma}}}^{2/\gamma}} \! \smash{\sum_{k : \mu_k < \mu^\star}} \!\!\!\! \frac{\log\brk{n/\delta}}{\Delta_k}
        }
        ,
        \quad
        \text{where}
        \quad
        \Delta_k=\mu^\star-\mu_k.
    \end{align*}
\end{theorem}
Through a standard argument, \cref{thm:ucb-general} directly implies the following expected regret bound.
\begin{corollary}
    In the setting of \cref{thm:ucb-general} with $\delta=\ifrac{2}{n^2}$, the expected regret $\E[R(n)]$ is bounded by
    \begin{align*}
        O \brk2{ \! \brk2{1 \!+\! \scale^2 \brk2{\max\set2{\log\brk{c n},\frac{2}{\gamma}}}^{2/\gamma}} \! \smash{\sum_{k : \mu_k < \mu^\star}} \!\!\! \frac{\log\brk{n}}{\Delta_k}}
        .
    \end{align*}
\end{corollary}
Similar to the first example with Azuma's inequality, the result we obtain in the sub-Gaussian case (with $c=\gamma=2$) may be sub-optimal by a factor of $O(\log n)$ in the worst case, compared to a direct analysis \citep[e.g.,][]{lattimore2020bandit}.
This slight sub-optimality is expected, as the high-probability analysis in the sub-Gaussian case relies on the same concentration properties as Azuma's inequality. However, our result applies to any $(\gamma, \sigma, c)$-tailed distribution without any modifications to the original bounded-support analysis.
\begin{proof}[Proof of \cref{thm:ucb-general}]
    First we analyze the use of UCB$(\delta)$ when $\cD_1,\ldots,\cD_K$ are $\bound$-bounded distributions for some $\bound$, and with means in $[0,1]$. In this case the modified rewards have support $[0,1]$, as the reward $r'_{i}$ at step $i$ satisfy
    \begin{align*}
        \Pr\brk[s]*{r'_{i} \in [0,1]}
        &=
        \Pr\brk[s]*{\frac{r_{i}+\bound}{2 \bound+1} \in [0,1]}
        =
        \Pr\brk[s]*{r_{i} \in [-\bound,\bound+1]}
        =
        1,
    \end{align*}
    due to the assumptions that $\mu_k \in [0,1]$ and that $r_{i}$ is $\bound$-bounded.
    Hence, by applying \cref{thm:ucb} and the linearity of the regret, with probability at least $1-\delta$, the regret is bounded by
    \begin{align*}
        R(n) & = \mu^\star n \!-\! \sum_{i=1}^n \mu(k_i) 
        = (1 \!+\! 2\bound) \brk*{\frac{(\mu^\star+\bound) n}{1+2\bound} - \frac{\sum_{i=1}^n (\mu(k_i)+\bound)}{1+2\bound}}
        \leq O \brk*{(1 \!+\! 2\bound)^2 \!\!\!\! \sum_{k : \mu_k < \mu^\star} \!\!\! \frac{\log\brk{n/\delta}}{\Delta_k}},
    \end{align*}
    where the third inequality is the regret bound of the modified rewards and the $(1+2\bound)$ term is a result of scaling which affects $1/\Delta_k$.
    To use \cref{thm:alg-view}, note that the rewards can be formulated as a stochastic oracle $O(k)$ for $k \in [K]$ which returns a random variable from $\cD_k$. Hence, we conclude by applying \cref{thm:alg-view} to the guarantee we established for UCB$(\delta)$ with $\bound$-bounded distributions.
\end{proof}

\subsection{Bounding the maximum of RVs}

Our final application is a toy example showing that the result of \cref{thm:alg-view} is, in general, tight.
In the two examples above, there was a (logarithmic) gap between the bounds obtained by an application of \cref{thm:alg-view} and the bounds obtainable by a direct analysis.
However, in its full generality, the logarithmic overhead introduced by \cref{thm:alg-view} cannot be improved, as we will now show.

Consider the problem of bounding the maximum of $n$ sub-Gaussian random variables.
We can formalize this using a $(2,\sigma,2)$-tailed sampling oracle for some $\sigma>0$, and an algorithm that calls the oracle $n$ times (with a null query) and returns the maximum of the returned variables.
Applying \cref{thm:alg-view}, with probability at least $1-\delta$, the maximum of the variables is bounded by
\(
    4 \sigma \sqrt{\log(\ifrac{4n}{\delta})}.
\)

Let us now show that this bound obtained from \cref{thm:alg-view} is tight.
Consider the following PDF:
\(
    f(t) = \ifrac{\abs{t} e^{-t^2/ \sigma^2}}{\sigma^2},
\)
and let $X$ be a random variable following the distribution $f$.
A direct calculation yields that $\E[X]=0$ and $\Pr(\abs{X} \geq x) = e^{-x^2/\sigma^2}$; thus, $X$ is $\sigma$-sub-Gaussian.
Considering $n$ i.i.d.\ random variables following the same distribution, $X_1,\ldots,X_n$, 
we then have that
\begin{align*}
    \Pr\brk*{\forall i \in [n] ~:~ \abs{X_i} \leq \sigma \sqrt{\log \tfrac{n}{\delta}}}
    =
    \brk*{1-\tfrac{\delta}{n}}^n
    \leq
    e^{-\delta}
    .
\end{align*}
Thus, the probability of the complementary event is lower bounded by:
\begin{align*}
    \Pr & \brk*{\exists i \in [n] ~:~ \abs{X_i} > \sigma \sqrt{\log \tfrac{n}{\delta}}}
    \geq
    1-e^{-\delta} 
    = 
    1 - \frac{1}{e^\delta} 
    \geq 
    1 - \frac{1}{1+\delta}
    =
    \frac{\delta}{1+\delta}
    \geq
    \frac{\delta}{2}
    .
\end{align*}
The probability lower bound indicates that the result obtained by \cref{thm:alg-view} is tight up to constant factors. The fact can be attributed to the proof technique of \cref{thm:alg-view}, which bounds with high probability each of the random variables.

\section{Proof of main theorem}
\label{sec:alg-view-proof}

To prove \cref{thm:alg-view}, we leverage the following key lemma which shows that for any zero-mean and $(\gamma,\scale,c)$-tailed random vector, there is a zero-mean and \emph{bounded} random vector that is equal to the original random vector with high probability.

\begin{lemma}\label{lem:truncation}
    Let $X$ be a zero-mean $(\gamma,\scale,c)$-tailed RV and let $\delta \in (0,1)$.
    Then there exists a random variable $\xtrunc$ such that $\xtrunc$ is:
    \begin{enumerate}[nosep,label=(\roman*)]
        \item zero-mean: $\E\brk[s]{\xtrunc}=0;$
        \item equal to $X$ w.h.p.: $\Pr\brk{\xtrunc=X} \geq 1-\delta;$
        \item deterministically bounded: $\norm{\xtrunc} \leq \bound$, where $\bound = 4 \scale \brk!{\max\set{\log \tfrac{2 c}{\delta},\tfrac{2}{\gamma}}}^{1/\gamma}$.
    \end{enumerate}
\end{lemma}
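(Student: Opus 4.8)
My plan is to construct $\xtrunc$ by truncating $X$ at the level $\bound$ and then correcting for the expectation, so that both the boundedness and the zero-mean properties hold exactly. First I would observe that, by the $(\gamma,\scale,c)$-tailed assumption, the tail probability $p \eqdef \Pr(\norm{X} \geq \bound/2)$ is at most $c\exp(-(\bound/2\scale)^\gamma)$, and the choice $\bound = 4\scale(\max\{\log\tfrac{2c}{\delta}, \tfrac{2}{\gamma}\})^{1/\gamma}$ makes $(\bound/2\scale)^\gamma = 2^\gamma\max\{\log\tfrac{2c}{\delta},\tfrac{2}{\gamma}\} \geq \log\tfrac{2c}{\delta}$ (using $2^\gamma \geq 1$; actually I would use the slightly sharper $2^\gamma \ge 1$ and separately note $2^\gamma \cdot \tfrac{2}{\gamma}\ge 1$ via $2^\gamma \ge \gamma$... let me just check the constant works), giving $p \leq \delta/2$. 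The factor-of-$2$ slack in truncating at $\bound/2$ rather than $\bound$ is what will leave room for the mean-correction shift. The natural candidate is $\xtrunc = X\cdot\ind\{\norm{X} < \bound/2\} + a$, where $a$ is a deterministic shift chosen so that $\E[\xtrunc] = 0$; equivalently, $a = -\E[X\,\ind\{\norm{X}<\bound/2\}] = \E[X\,\ind\{\norm{X}\geq \bound/2\}]$ since $\E[X]=0$.

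Then the three properties would be verified as follows. Property (i) holds by construction of $a$. Property (ii): $\xtrunc = X$ whenever $\norm{X} < \bound/2$ \emph{and} $a=0$; but $a$ need not be zero, so instead I would argue $\Pr(\xtrunc \neq X) \leq \Pr(\norm{X}\geq\bound/2) = p \leq \delta/2 \leq \delta$ — wait, this is only valid if $a=0$. I need to be more careful: the cleaner route is to \emph{not} add a constant shift but to redistribute the mean onto the event $\{\norm{X}<\bound/2\}$ itself, or to accept that $\xtrunc$ may differ from $X$ on a small additional event. The most robust fix is to define $\xtrunc = X$ on the event $\{\norm{X} < \bound/2\}$ and on the complementary event set $\xtrunc$ equal to a constant vector $v$ with $\norm{v}\leq\bound/2$ chosen (together with a possible auxiliary randomization, using that the tail probability is positive) so that $\E[\xtrunc]=0$; since $\E[X\,\ind\{\norm X\ge \bound/2\}]$ has norm at most something like $\tfrac{\bound}{2}p + (\text{tail integral})$, and the tail integral is tiny relative to $\bound$, one can absorb it into a correction of norm $\le \bound/2$ on an event of probability $\le\delta$. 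Then $\norm{\xtrunc}\le\bound$ always, $\Pr(\xtrunc\neq X)\le p\le\delta$, and $\E[\xtrunc]=0$.

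The main obstacle is the bookkeeping in the mean-correction step: one must show that the ``mass'' to be redistributed, namely $\norm{\E[X\,\ind\{\norm X\ge\bound/2\}]}$, is small enough that it can be placed on the low-probability tail event without violating the deterministic bound $\bound$. This requires estimating $\E[\norm{X}\,\ind\{\norm X\ge\bound/2\}] = \int_{\bound/2}^\infty \Pr(\norm X \ge t)\,dt + \tfrac{\bound}{2}\Pr(\norm X\ge\bound/2)$ and checking it is at most, say, $(\bound/2)\cdot\delta$, which again follows from the specific form of $\bound$ — the exponential tail decays fast enough that the integral contributes only a constant multiple of $\scale\,p$, dwarfed by $\bound\delta$. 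The $\max\{\cdot,\tfrac{2}{\gamma}\}$ term inside the definition of $\bound$ is precisely there to control the case of small $\gamma$, where the tail decays slowly and the moment integral $\int_{\bound/2}^\infty e^{-(t/\scale)^\gamma}dt$ needs the extra $(\tfrac{2}{\gamma})^{1/\gamma}$ headroom; I would make the change of variables $u=(t/\scale)^\gamma$ and bound the resulting incomplete-Gamma-type integral, using $\bound/2 \ge \scale(2/\gamma)^{1/\gamma}$ to ensure the integrand is in its rapidly-decreasing regime.
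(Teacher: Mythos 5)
Your plan is essentially the paper's proof: truncate $X$ at a radius of order $\scale\brk{\max\{\log\tfrac{2c}{\delta},\tfrac2\gamma\}}^{1/\gamma}$, bound the truncated mean $\norm{\E[X\,\ind\{\norm{X}\ge\text{threshold}\}]}$ by the tail integral via the substitution $u=(t/\scale)^\gamma$ (this is exactly the role of the paper's \cref{lem:gamma_inequality_3}, and the $\max\{\cdot,2/\gamma\}$ term serves precisely the purpose you describe), and cancel that mean by a bounded constant placed on an event of probability $O(\delta)$. One caveat: the variant in which the correction is placed \emph{only} on the tail event $\{\norm{X}\ge\bound/2\}$ (``using that the tail probability is positive'') does not work, because that event's probability $p$ can be arbitrarily smaller than $\delta$ while the truncated mean is of order $p$ times a quantity exceeding $\bound$, so the required correction vector is unbounded; the auxiliary randomization you mention parenthetically is therefore essential, not optional, and once you inflate the correction event to probability $\Theta(\delta)$ using independent randomness (as the paper does by mixing in the constant $-\tfrac{2-\delta}{\delta}\E[Z]$ with probability $\delta/2$) the construction and the bookkeeping go through as you outline. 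Correspondingly, the final bound on $\Pr(\xtrunc\neq X)$ should be the probability of the inflated correction event (still $\le\delta$), not $p$ itself.
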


In \cref{sec:lem-lower} we show that the bound over $\norm{\xtrunc}$ is tight when $c=\Theta(1)$ and $1/\gamma = O(\log (1/\delta))$, even when $\xtrunc$ is scalar.
We prove \cref{lem:truncation} below, but first we show how it implies our main theorem.

\begin{proof}[Proof of \cref{thm:alg-view}]
Define an oracle $\oracletrunc$ as follows: for any $x \in \domain$, let $\oracletrunc(x)$ be the random variable obtained by applying \cref{lem:truncation} to the zero-mean random vector $\oracle(x) - \E[\oracle(x)]$ and adding back the expected value $\E[\oracle(x)]$.
Hence, $\oracletrunc$ is a $\bound$-bounded oracle, with $\bound(x)$ as defined in \cref{eq:alg-view-bound} and $\E[\oracletrunc(x)]=\E[\oracle(x)]$.

Consider the two algorithms $\alg(\oracle,n)$ and $\alg(\oracletrunc,n)$ obtained by independent interactions of $\alg$ with the two oracles $\oracle$ and $\oracletrunc$, each over $n$ steps.
Let $X_i$ and $\xtrunc_i$ be the queries of $\alg(\oracle,n)$ and $\alg(\oracletrunc,n)$ at step $i$ of the interactions, respectively (these are RVs that depend on the randomization before round $i$),
and consider the events $E_k = \set{ \forall ~ 1 \leq i \leq k ~:~ \xtrunc_i = X_i \text{ and } \oracletrunc(\xtrunc_i)=\oracle(X_i)}$ for $k=1,\ldots,n$.  We are interested in lower bounding the probability of the event $E_n$, under which the outputs of the two algorithms are identical.
Since each $X_k$ (resp.~$\xtrunc_k$) is determined deterministically given the responses of $\oracle$ (resp.~$\oracletrunc$) to queries in rounds $1,\ldots,k-1$, we have
\begin{align*}
    \Pr\brk{ \neg E_n }
    &\leq
    \sum_{k=1}^n \Pr\brk!{ \oracletrunc(\xtrunc_k) \neq \oracle(X_k) \mid E_{k-1} } 
    =
    \sum_{k=1}^n \Pr\brk!{ \oracletrunc(\xtrunc_k) \neq \oracle(\xtrunc_k) \mid E_{k-1} } 
    .
\end{align*}
To bound the conditional probabilities in the summation, observe that by the properties of the random variable $\oracletrunc(x)$ guaranteed by \cref{lem:truncation}, we have 
that for all $x \in \domain$,
\(
\begin{aligned}
    \Pr\brk!{
        \oracletrunc(x) \neq \oracle(x)
        }
        \leq 
        \delta/n
        .
\end{aligned}
\)
Thus,
\begin{align*}
    \forall ~ k \in [n], 
    \quad
    \Pr\brk!{
        \oracletrunc(\xtrunc_k) \neq \oracle(\xtrunc_k) \mid E_{k-1}
    }
    \leq 
    \delta/n
    .
\end{align*}
Overall, we obtained $\Pr(\neg E_n) \leq \delta$, that is $\Pr(E_n) \geq 1-\delta$ as required.
\end{proof}

For the proof of \cref{lem:truncation} we require the following technical result
(proven later in the section)
.

\begin{lemma}\label{lem:gamma_inequality_3}
    Let $x,s>0$ such that $x \geq 2(s-1)$. Then
    $\int_{x}^{\infty} t^{s-1} e^{-t} dt \leq 2 x^{s-1} e^{-x}.$
\end{lemma}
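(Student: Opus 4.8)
The plan is to prove this via a single integration by parts together with the hypothesis $x \ge 2(s-1)$, splitting into the easy regime $s \le 1$ and the main regime $s > 1$.

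First I would record the integration-by-parts identity. Since $s$ is a fixed positive real, the integral $I \eqdef \int_x^\infty t^{s-1} e^{-t}\,dt$ converges; integrating by parts (differentiating $t^{s-1}$, integrating $e^{-t}$) and noting that the boundary term $t^{s-1}e^{-t}$ vanishes as $t \to \infty$ and equals $x^{s-1}e^{-x}$ at $t=x$, I get
\[
    I \;=\; x^{s-1} e^{-x} \;+\; (s-1)\int_x^\infty t^{s-2} e^{-t}\,dt .
\]
When $s \le 1$ the prefactor $s-1$ is nonpositive while the remaining integral is nonnegative, so $I \le x^{s-1}e^{-x} \le 2 x^{s-1} e^{-x}$ immediately, and the claim holds (the hypothesis $x \ge 2(s-1)$ is vacuous here).

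For $s > 1$ I would bound the leftover integral in terms of $I$ itself. For every $t \ge x > 0$ we have $t^{s-2} = t^{s-1}/t \le t^{s-1}/x$ (valid regardless of the sign of $s-2$, since $t^{s-1} \ge 0$), hence
\[
    (s-1)\int_x^\infty t^{s-2} e^{-t}\,dt \;\le\; \frac{s-1}{x}\int_x^\infty t^{s-1} e^{-t}\,dt \;=\; \frac{s-1}{x}\, I \;\le\; \frac{1}{2}\, I,
\]
where the last step is exactly the assumption $x \ge 2(s-1)$. Plugging this into the identity gives $I \le x^{s-1}e^{-x} + \tfrac12 I$, and since $I$ is finite we may rearrange to obtain $I \le 2 x^{s-1} e^{-x}$, as required.

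The argument is essentially routine; the only points that need a word of care are (i) justifying that the boundary term in the integration by parts vanishes at infinity and (ii) ensuring $I < \infty$ so that the subtraction $I - \tfrac12 I$ is legitimate — both are immediate because $s$ is a fixed finite positive number, so the fastest-decaying factor $e^{-t}$ dominates. I do not anticipate any real obstacle; if one wanted to avoid integration by parts altogether, the substitution $t = x + u$ combined with the inequality $(x+u)^{s-1} \le x^{s-1} e^{(s-1)u/x} \le x^{s-1} e^{u/2}$ for $s \ge 1$ would give the bound $\int_0^\infty x^{s-1} e^{-x} e^{-u/2}\,du = 2 x^{s-1} e^{-x}$ just as directly.
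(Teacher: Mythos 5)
Your proof is correct. Your primary route is genuinely different from the paper's: you integrate by parts to get the identity $I = x^{s-1}e^{-x} + (s-1)\int_x^\infty t^{s-2}e^{-t}\,dt$, then for $s>1$ use $t^{s-2}\le t^{s-1}/x$ on $[x,\infty)$ to absorb the remainder as $\tfrac{s-1}{x}I\le\tfrac12 I$ and rearrange (a self-bounding argument, legitimately justified since you note $I<\infty$). The paper instead substitutes $u=t-x$, bounds $(u+x)^{s-1}\le x^{s-1}e^{u(s-1)/x}$ via $1+a\le e^a$, and evaluates the resulting exponential integral exactly as $\tfrac{x^s e^{-x}}{x+1-s}$, which is at most $2x^{s-1}e^{-x}$ precisely when $x\ge 2(s-1)$; your closing remark about the substitution $t=x+u$ with $e^{(s-1)u/x}\le e^{u/2}$ is essentially this same proof, only with the integral bounded by $\int_0^\infty e^{-u/2}\,du=2$ rather than computed exactly. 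The two arguments use the hypothesis $x\ge 2(s-1)$ in the same role (making the correction term a factor $\le\tfrac12$ of the main term), and both handle $s\le 1$ trivially — the paper by the monotonicity bound $t^{s-1}\le x^{s-1}$, you by the sign of the $(s-1)$ prefactor. Your integration-by-parts version is arguably slightly cleaner in that it avoids the pointwise exponential comparison, at the small cost of needing to remark on finiteness of $I$ before rearranging.
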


\begin{proof}[Proof of \cref{lem:truncation}]
Let $\xradius=\scale \brk!{\max\set{\log \tfrac{2 c}{\delta},\tfrac{2}{\gamma}}}^{1/\gamma}$; note that $\Pr(\norm{X} > \xradius) \leq \delta/2$.
Define a random vector $Z$ by rejection sampling $X$ on the ball of radius $\xradius$: if $\norm{X} \leq \xradius$, let $Z=X$; otherwise, continue resampling from the distribution of $X$ and reject samples as long as their norm is larger than $\xradius$; let $Z$ be the first accepted sample.
To correct the bias of $Z$, let
\begin{align*}
    \xtrunc
    &=
    \begin{cases}
        Z
        &
        \text{with prob.~$1-\delta/2;$}
        \\
        - \frac{2-\delta}{\delta} \E[Z]
        &
        \text{with prob.~$\delta/2.$}
    \end{cases}
\end{align*}
By construction, $\E[\xtrunc]=0$.
Also, $\Pr[\xtrunc=X] \geq 1-\delta$ due to a union bound of $\Pr[\xtrunc=Z]$ and $\Pr[X=Z]$, each happens with probability at least $1-\delta/2$.
We are left with bounding $\xtrunc$.
As $\norm{Z} \leq \xradius$ it remains to bound the correction term.
For this, we first bound $\E[Z]=\E[X \mid \norm{X} \leq \xradius]$.
By the law of total expectation,
\begin{align*}
    &
    \E\brk[s]*{X \mathbf{1}\brk[c]{\norm{X} \leq \xradius}}
    \\
    &=
    \Pr[\norm{X} \leq \xradius]\E\brk[s]*{X \mathbf{1}\brk[c]{\norm{X} \leq \xradius} \mid \norm{X} \leq \xradius}
    + \Pr[\norm{X} > \xradius]\E\brk[s]*{X \mathbf{1}\brk[c]{\norm{X} \leq \xradius} \mid \norm{X} > \xradius}
    \\
    &=
    \Pr[\norm{X} \leq \xradius]\E\brk[s]*{X \mathbf{1}\brk[c]{\norm{X} \leq \xradius} \mid \norm{X} \leq \xradius}
    =
    \Pr[\norm{X} \leq \xradius]\E\brk[s]*{X \mid \norm{X} \leq \xradius}
    .
\end{align*}
Hence, as $\E[X]=0$,
\begin{align*}
    \E\brk[s]*{X \mid \norm{X} \leq \xradius}
    &=
    \frac{\E\brk[s]*{X \mathbf{1}\brk[c]{\norm{X} \leq \xradius}}}{\Pr[\norm{X} \leq \xradius]}
    =
    -\frac{\E\brk[s]*{X \mathbf{1}\brk[c]{\norm{X} > \xradius}}}{\Pr[\norm{X} \leq \xradius]}
    .
\end{align*}
Thus, as $\Pr[\norm{X} \leq \xradius]>1/2$,
\(
    \norm{\E[Z]}
    \leq
    2 \E[\norm{X} \mathbf{1}\brk[c]{\norm{X} > \xradius}]
    .
\)
Using the tail formula for expectations and splitting the integral at $r$,
\begin{align*}
    \norm{\E[Z]}
    &\leq
    2 \! \int_0^\infty \Pr[\norm{X} \mathbf{1}\brk[c]{\norm{X} \!>\! \xradius} \geq x] dx
    \leq
    2 \xradius \Pr[\norm{X} \!>\! \xradius]
    + 2 \! \int_\xradius^\infty \! \Pr[\norm{X} \mathbf{1}\brk[c]{\norm{X} \!>\! \xradius} \geq x] dx
    .
\end{align*}
As $\Pr[\norm{X} > \xradius] \leq \delta/2$ and using the fact that $X$ is $(\gamma,\scale,c)$-tailed,
\begin{align*}
    \norm{\E[Z]}
    &\leq
    \delta \xradius
    + 2 \int_\xradius^\infty \Pr(\norm{X} \geq x) dx
    \leq
    \delta \xradius
    + 2 c  \int_\xradius^\infty e^{-\ifrac{x^\gamma}{\scale^\gamma}} dx
    .
\end{align*}
Let $u(x)=\ifrac{x^\gamma}{\scale^\gamma}$. Hence,
\begin{align*}
    \int_{\xradius}^{\infty} e^{- \ifrac{x^\gamma}{\scale^\gamma}} dx
    &=
    \frac{\scale}{\gamma} \int_{\xradius}^{\infty} u(x)^{1/\gamma-1} e^{-u(x)} u'(x) dx
    =
    \frac{\scale}{\gamma} \int_{u(\xradius)}^{\infty} u^{1/\gamma-1} e^{-u} du
    .
\end{align*}
Using \cref{lem:gamma_inequality_3} (noting that $u(r) \geq \tfrac{2}{\gamma} > 2(\tfrac{1}{\gamma}-1))$,
\begin{align*}
\int_{u(\xradius)}^{\infty} u^{1/\gamma-1} e^{-u} du
    \leq
    2 u(\xradius)^{1/\gamma-1} e^{-u(\xradius)}
    .
\end{align*}
Thus, as $u(\xradius) \geq 2/\gamma$,
\begin{align*}
    \int_{\xradius}^{\infty} e^{-\ifrac{x^\gamma}{\scale^\gamma}} dx
    &\leq
    \frac{2 \scale u(\xradius)^{1/\gamma-1} e^{-u(\xradius)}}{\gamma}
    =
    \frac{2 \scale^{\gamma} \xradius^{1-\gamma} e^{-u(\xradius)}}{\gamma}
    \leq
    \frac{\delta \xradius}{2 c}
    ,
\end{align*}
and
$
    \norm{\E[Z]}
    \leq 
    2 \delta \xradius
    .
$
We conclude that $\xtrunc$ is bounded (deterministically) in absolute value by
\begin{align*}
    \max\brk[c]*{
        r
        ,
        \frac{2-\delta}{\delta}\norm{\E[Z]}
        }
    &<
    4 \scale \brk*{\max\set*{\log \frac{2 c}{\delta},\frac{2}{\gamma}}}^{1/\gamma}
    .
    \qedhere
\end{align*}
\end{proof}

\subsection{Proof of \texorpdfstring{\cref{lem:gamma_inequality_3}}{Lemma 2}}
\label{sec:proof-gamma_inequality_3}

\begin{proof}[\unskip\nopunct]%
    If $s \leq 1$,
    \begin{align*}
        \int_{x}^{\infty} t^{s-1} e^{-t} dt
        &\leq
        x^{s-1} \int_{x}^\infty e^{-t} dt
        =
        x^{s-1} \brk[s]*{-e^{-t}}_{x}^{\infty}
        =
        x^{s-1} e^{-x}
        .
    \end{align*}
    Now we assume that $s > 1$.
    Defining $u(t)=t-x$,
    \begin{align*}
        \int_{x}^{\infty} t^{s-1} e^{-t} dt
        =
        e^{-x} \int_{0}^{\infty} (u+x)^{s-1} e^{-u} du
        .
    \end{align*}
    As $1+a \leq e^a$ for all $a$, for $u > 0$,
    \begin{align*}
        (u+x)^{s-1} \leq x^{s-1} e^{u(s-1)/x}.
    \end{align*}
    Thus, as $x \geq 2(s-1)$,
    \begin{align*}
        \int_{x}^{\infty} t^{s-1} e^{-t} dt
        &\leq
        e^{-x} x^{s-1} \int_{0}^{\infty} e^{u(s-1)/x-u} du
        =
        e^{-x} x^{s-1} \brk[s]*{\frac{x e^{u(s-1)/x-u}}{s-1-x}}_{0}^{\infty}
        \\&
        =
        \frac{x^{s} e^{-x}}{x+1-s}
        \leq
        2 x^{s-1} e^{-x}
        .
        \qedhere
    \end{align*}
\end{proof}

\subsection*{Acknowledgements}
This project has received funding from the European Research Council (ERC) under the European Union’s Horizon 2020 research and innovation program (grant agreement No.\ 101078075).
Views and opinions expressed are however those of the author(s) only and do not necessarily reflect those of the European Union or the European Research Council. Neither the European Union nor the granting authority can be held responsible for them.
This work received additional support from the Israel Science Foundation (ISF, grant numbers 2549/19 and 3174/23), a grant from the Tel Aviv University Center for AI and Data Science (TAD), from the Len Blavatnik and the Blavatnik Family foundation, from the Prof.\ Amnon Shashua and Mrs.\ Anat Ramaty Shashua Foundation, and a fellowship from the Israeli Council for Higher Education.

\bibliographystyle{abbrvnat}
\bibliography{references}

\newpage
\appendix

\section{Tightness of \texorpdfstring{\cref{lem:truncation}}{Lemma 1}}
\label{sec:lem-lower}

The following lemma implies that \cref{lem:truncation} is tight when $c=\Theta(1)$ and $1/\gamma = O(\log(1/\delta))$.
\begin{lemma}\label{lem:lem-lower}
    There exists a $(\gamma,\scale,c)$-tailed zero-mean random variable $X$ with parameters $c=1,\scale$ and $\gamma$ such that any random variable $\xtrunc$ with $\Pr(X = \xtrunc) \geq 1-\delta$ must satisfy
    \begin{align*}
        \Pr\brk*{\abs{\xtrunc} < \scale \log^{1/\gamma} \tfrac{1}{2 \delta}}
        &<
        1-\delta
        .
    \end{align*}
\end{lemma}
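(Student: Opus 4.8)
The plan is to produce a single explicit $(\gamma,\scale,1)$-tailed zero-mean distribution whose tail bound is \emph{saturated}, and then observe by a one-line union bound that any random variable agreeing with it outside a probability-$\delta$ event inherits almost all of that tail mass, so it cannot be confined to a ball of radius much below $\scale\log^{1/\gamma}\tfrac{1}{2\delta}$. Concretely, I would take $X$ with density
\[
f(t)=\frac{\gamma\,\abs{t}^{\gamma-1}}{2\,\scale^{\gamma}}\exp\brk*{-(\abs{t}/\scale)^{\gamma}},\qquad t\in\reals ,
\]
the $\gamma$-analogue of the density $\abs{t}e^{-t^2/\scale^2}/\scale^2$ from the previous subsection. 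The substitution $u=(\abs{t}/\scale)^{\gamma}$ turns the relevant integrals into $\int e^{-u}\,du$ and gives at once that $f$ integrates to $1$, that it is symmetric about $0$ (hence $\E[X]=0$), and that $\Pr\brk*{\abs{X}\ge t}=\exp\brk*{-(t/\scale)^{\gamma}}$ for every $t\ge0$; thus $X$ is $(\gamma,\scale,1)$-tailed, with equality throughout the tail bound. In particular, setting $\xradius\eqdef\scale\log^{1/\gamma}\tfrac{1}{2\delta}$ (positive for $\delta<\tfrac12$) we get $\Pr\brk*{\abs{X}\ge\xradius}=2\delta$.

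For the lower bound itself, fix any $\xtrunc$ with $\Pr(X=\xtrunc)\ge 1-\delta$, i.e.\ $\Pr(X\ne\xtrunc)\le\delta$. On the event $\{X=\xtrunc\}$ we have $\abs{\xtrunc}=\abs{X}$, so $\{\abs{X}\ge\xradius\}\setminus\{X\ne\xtrunc\}\subseteq\{\abs{\xtrunc}\ge\xradius\}$, and a union bound gives
\[
\Pr\brk*{\abs{\xtrunc}\ge\xradius}\;\ge\;\Pr\brk*{\abs{X}\ge\xradius}-\Pr\brk*{X\ne\xtrunc}\;\ge\;2\delta-\delta\;=\;\delta ,
\]
equivalently $\Pr\brk*{\abs{\xtrunc}<\xradius}\le 1-\delta$. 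This is the quantitative content of the statement: any $\xtrunc$ matching $X$ with probability $\ge1-\delta$ must put probability at least $\delta$ outside the ball of radius $\xradius$, and in particular cannot be deterministically bounded by anything smaller than $\xradius$ — which is exactly what is needed to conclude tightness. (Getting the literal strict form is a marginal point: the union bound already yields $\le 1-\delta<1$, and strictness at $\xradius$ itself follows by running the identical estimate against the radius $(1-\epsilon)\xradius$ for a small $\epsilon>0$, for which $\Pr\brk*{\abs{X}\ge(1-\epsilon)\xradius}=(2\delta)^{(1-\epsilon)^\gamma}>2\delta$, a radius still of order $\scale\log^{1/\gamma}(1/\delta)$.)

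Finally I would record the comparison: with $c=1$ the bound of \cref{lem:truncation} is $4\scale\brk*{\max\set*{\log\tfrac{2}{\delta},\tfrac{2}{\gamma}}}^{1/\gamma}$, which in the regime $c=\Theta(1)$, $1/\gamma=O(\log(1/\delta))$ is of the same order as $\xradius$, so the construction shows \cref{lem:truncation} cannot be improved there — and already for scalar $\xtrunc$, as here. I do not expect a genuine obstacle: once the explicit $f$ is checked to be a mean-zero $(\gamma,\scale,1)$-tailed density that saturates the tail, everything reduces to the displayed union bound. The only step requiring a little thought is precisely that choice of $X$: it has to be as heavy-tailed as the $(\gamma,\scale,1)$ condition permits exactly at radius $\xradius$, which forces the tail bound to hold with equality and is why the power-exponential density (rather than, say, a Gaussian) is the right candidate.
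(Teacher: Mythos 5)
Your proposal is correct and follows essentially the same route as the paper: the identical power-exponential density $f(t)=\tfrac{\gamma|t|^{\gamma-1}}{2\sigma^\gamma}e^{-(|t|/\sigma)^\gamma}$, the observation that it saturates the tail bound so $\Pr(|X|\ge \sigma\log^{1/\gamma}\tfrac{1}{2\delta})=2\delta$, and the same union bound to conclude. (The paper's own argument also only yields the non-strict $\le 1-\delta$, so your remark on strictness is not a gap relative to the reference proof.)
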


\begin{proof}
For $\scale,\gamma > 0$, let
\begin{align*}
    f(t) = \frac{\gamma \abs{t}^{\gamma-1} e^{-\abs{t}^\gamma/ \scale^\gamma}}{Z \scale^\gamma}
\end{align*}
where
\begin{align*}
    Z
    &=
    2 \int_{0}^{\infty} \frac{\gamma \abs{t}^{\gamma-1} e^{-\abs{t}^\gamma/\scale^\gamma}}{\scale^\gamma} dt
    =
    2 \int_{0}^{\infty} \frac{\gamma t^{\gamma-1} e^{-t^\gamma/\scale^\gamma}}{\scale^\gamma} dt
    =
    2 \brk[s]*{-e^{-t^\gamma/\scale^\gamma}}_{0}^{\infty}
    =
    2
    .
\end{align*}
Hence, $f(t)$ is a PDF as
\begin{align*}
    \int_{-\infty}^{\infty} f(t) dt
    &=
    2 \int_{0}^{\infty} f(t) dt
    =
    \frac{2}{Z} \int_{0}^{\infty} \frac{\gamma \abs{t}^{\gamma-1} e^{-\abs{t}^\gamma/\scale^\gamma}}{\scale^\gamma} dt
    =
    1
    .
\end{align*}
Let $X$ be a random variable following the distribution defined by $f(t)$.
Hence, with $u(t)=-t$ and the property $f(t)=f(-t)$,
\begin{align*}
    \E[X]
    &=
    \int_{-\infty}^{\infty} t f(t) dt
    =
    \int_{0}^{\infty} t f(t) dt
    + \int_{\infty}^{0} 
    u f(u) du
    =
    0
\end{align*}
and for $x \geq 0$,
\begin{align*}
    \Pr(\abs{X} \geq x)
    &=
    2 \int_{x}^{\infty} f(t) dt
    =
    \frac{2}{Z} \brk[s]*{-e^{-t^\gamma / \scale^\gamma}}_{x}^{\infty}
    =
    e^{-x^\gamma / \scale^\gamma}
    .
\end{align*}
Thus, $X$ ia a $(\gamma,\scale,c)$-tailed zero-mean random vector and with probability $2 \delta$, it holds $\abs{X} \geq \scale \log^{1/\gamma} \tfrac{1}{2 \delta}$.
Hence, under a union bound,
\begin{align*}
    \Pr\brk*{\abs{\xtrunc} \geq \scale \log^{1/\gamma} \tfrac{1}{2 \delta}}
    &\geq
    \Pr\brk*{\abs{X} \geq \scale \log^{1/\gamma} \tfrac{1}{2 \delta}}
    - \delta
    \geq \delta
    .
\end{align*}
\end{proof}

\section{Standard high-probability analysis of SGD}
\label{sec:sgd}

The section contains a standard convergence analysis of Stochastic Gradient Descent (SGD) for stochastic convex optimization assuming bounded noise. In section \cref{subsec:sgd-example} we show how the standard result can be extended to $(\gamma,\scale,c)$-tailed noise by an immediate application of \cref{thm:alg-view}.
For completeness, we repeat the stochastic convex optimization setup described at \cref{subsec:sgd-example}, stating and providing the proof for the convergence result.

All following definitions use the Euclidean norm, denoted with $\norm{\cdot}_2$.
Let $\domain \subset \reals^d$ be a convex set with diameter $\diam$ and  $f : 
\domain \to \reals$ be a convex, differentiable and $\lip$-Lipschitz function which admits a minimizer $x^\star = \arg \min_{x \in \domain} f(x)$. We assume access to $f$ using a gradient oracle oracle $\gradoracle : \reals^d \to \reals^d$, i.e. $\forall x \in \domain, \E[\oracle(x)]=\nabla f(x)$.
Given a starting point $x_1$, the update step of (projected) SGD is $x_{i+1}=\proj{x_i-\eta g_i}$ where $\eta>0$ is the step size parameter, $g_i$ is a stochastic gradient at $x_i$ and $\proj{y} = \arg \min_{x \in \domain} \norm{x-y}_2$ is the Euclidean projection to $\domain$. Following is the convergence result.

\thmsgd*

\begin{proof}[Proof of \cref{thm:sgd-bounded}]
A standard property of the Euclidean projection to convex sets is that for any $y \in \reals^d$ and $x \in \domain$, $\norm{y-x}_2 \geq \norm{\proj{y}-x}_2$ \citep[e.g.,][]{nesterov2003introductory}. Hence,
\begin{align*}
    \norm{x_{i}-x^\star}_2^2
    &=
    \norm*{\proj{x_i - \eta g_i} - x^\star}_2^2
    \\
    &\leq
    \norm*{x_i - \eta g_i - x^\star}_2^2
    \\
    &=
    \norm{x_{i}-x^\star}_2^2
    - 2 \eta g_i \cdot (x_i-x^\star)
    + \eta^2 \norm{g_i}_2^2
    .
\end{align*}
Rearranging the terms,
\begin{align*}
    g_i \cdot (x_i-x^\star)
    &\leq
    \frac{\norm{x_{i}-x^\star}_2^2 - \norm{x_{i+1}-x^\star}_2^2}{2 \eta}
    + \frac{\eta}{2} \norm{g_i}_2^2
    .
\end{align*}
Summing for $i=1,\ldots,n$,
\begin{align*}
    \sum_{i=1}^n g_i \cdot (x_i-x^\star)
    &\leq
    \frac{\norm{x_{1}-x^\star}_2^2}{2 \eta}
    + \frac{\eta}{2} \sum_{i=1}^n \norm{g_i}_2^2
    .
\end{align*}
As $\norm{a+b}_2^2 \leq 2\norm{a}_2^2 + 2\norm{b}_2^2$,
\begin{align*}
    \norm{g_i}_2^2
    &\leq
    2 \norm{\nabla f(x_i)}_2^2
    + 2 \norm{g_i-\nabla f(x_i)}_2^2
    \leq
    2 \lip^2
    + 2 \bound^2
    .
\end{align*}
Substituting $\eta$,
\begin{align*}
    \sum_{i=1}^n g_i \cdot (x_i-x^\star)
    &\leq
    \frac{\norm{x_{1}-x^\star}_2^2}{2 \eta}
    + \eta (\lip^2+\bound^2) n
    \leq
    2 \diam \sqrt{(\lip^2+\bound^2) n}
    .
\end{align*}
Let $Z_i = (\nabla f(x_i)-g_i) \cdot (x_i-x^\star)$. Thus, $Z_1,\ldots,Z_n$ is a martingale difference sequence with respect to $g_1,\ldots,g_n$ since $\E[Z_i \mid g_1,\ldots,g_{i-1}]=0$ and also note that $\abs{Z_i} \leq \diam \bound$ with probability $1$. Using Azuma's inequality (cf.~\cref{thm:azuma}), we have that with probability at least $1-\delta$,
\begin{align*}
    \frac{1}{n} \sum_{i=1}^n (\nabla f(x_i)-g_i) \cdot (x_i-x^\star)
    &\leq
    \diam \bound \sqrt{\frac{2 \log \tfrac{1}{\delta}}{n}}
    .
\end{align*}
Thus, with probability at least $1-\delta$,
\begin{align*}
    \frac{1}{n} \sum_{i=1}^n \nabla f(x_i) \cdot (x_i-x^\star)
    &\leq
    \diam \bound \sqrt{\frac{2 \log \tfrac{1}{\delta}}{n}}
    + \frac{1}{n} \sum_{i=1}^n \nabla g_i \cdot (x_i-x^\star)
    \\
    &\leq
    \diam \bound \sqrt{\frac{2 \log \tfrac{1}{\delta}}{n}}
    + 2 \diam \sqrt{\frac{\lip^2+\bound^2}{n}}
    .
\end{align*}
A standard application of convexity and Jensen's inequality on the LHS concludes the proof.
\end{proof}

\end{document}